\DeclareMathOperator*{\argmax}{argmax}
\newtheorem{corollary}{Corollary}
\newtheorem{lemma}{Lemma}
\title{Kernel Treelets}
\author{Hedi Xia and Hector D. Ceniceros}
\date{Department of Mathematics, University of California Santa Barbara, 93106}
\begin{document}

\maketitle

\begin{abstract} A new method for hierarchical clustering is presented. It combines treelets, a particular multiscale decomposition of data,  with a projection on a reproducing kernel Hilbert space. The proposed approach, called kernel treelets (KT), effectively substitutes the correlation coefficient matrix used in treelets with a symmetric, positive semi-definite matrix efficiently constructed from a  kernel function. Unlike most clustering methods, which require data sets to be numeric, KT can be applied to more general data and yield a multi-resolution sequence of basis on the data directly in feature space.
The effectiveness and potential of KT in clustering analysis is illustrated with some examples. 
 \end{abstract}
\section{Introduction}
Treelets, introduced by Lee, Nadler, and Wasserman~\cite{Treelets,Treelets2}, is a method to produce a multiscale, hierarchical decomposition of unordered data. The central premise of Treelets is to exploit sparsity and capture intrinsic localized structures with only a few features, represented in terms of an orthonormal basis. The hierarchical tree constructed by the treelet algorithm provides a scale-based partition of the data that can be used for classification, specially for cluster analysis~\cite{clustering}. 

Cluster analysis, also called clustering, is concerned with finding a partition of a set such that its corresponding equivalence class captures similarity of its elements. The Treelet approach is an example of 
hierarchical clustering (HC) \cite{hierarchical_clustering}, which is a type of methods that provides a nested and multiscale clustering. The typical complexity of HC methods is $O(n^3)$ (where $n$ denotes the number of data in the dataset) but Treelets, like single linkage HC \cite{slink} and complete linkage HC \cite{clink}, 
can be done in $O(n^2)$ operations.  Most of these clustering methods are only applicable to numerical dataset only. However,  many modern datasets do not have clear representations in $\mathbb{R}^p$  due for example to missing data, length difference, and non-numeric attributes. A typical solution to this problem  usually involves finding a projection from each observation to $\mathbb{R}^p$ as is the case for example in text vectorization~\cite{vectorization}, array alignment~\cite{alignment}, and missing-data imputation~\cite{imputation}. These particular projections pose considerable challenges and might raise the bias of the model if false assumptions are made.

In this paper we propose a HC method that combines Treelets with  a projection on a feature space that is a
Reproducing Kernel  Hilbert Space (RKHS). We call this method Kernel Treelets (KT). It effectively substitutes the correlation coefficient matrix, used by the original treelet method as a measure of similarity among variables, with a symmetric, positive semi-definite matrix constructed from a (Mercer) kernel function. The intuition behind this approach is that inner products provide a measure of similarity  and a projection into a RKHS, done via the so-called Kernel trick~\cite{HastieBook,TheodoridisBook}, is a natural and efficient way to construct appropriate similarity matrices for a wide variety of data sets, including those mentioned above. We present some examples that demonstrate the potential of KT as an effective tool for clustering analysis. 

% The rest of the paper is as follows. 

\section{Background Information}
We provide in this section a brief description of the Treelet algorithm~\cite{Treelets,Treelets2} and the Kernel method~\cite{kernel}. Treelets are based on the repeated application of two dimensional (Jacobi) rotations to a
matrix measuring the similarity of variables. So we start by reviewing Jacobi (also called Givens)  rotations first. 

\subsection{Jacobi Rotations}

A Jacobi rotation matrix $J$ is an orthogonal matrix with at most 4 entries different from the identity, or more generally, a rotation operator on a 2 dimensional subspace generated by two coordinate axes. 
For a given symmetric matrix $M$ and entry $pq$ and  rotation matrix $J$ is constructed so that 
$$[J^TMJ]_{pq} = [J^TMJ]_{qp} = 0.$$
The construction of $J$ is equivalent to finding the cosine (c) and sine (c) of the angle of rotation, which satisfy
\[\begin{bmatrix}c&-s\\s&c\end{bmatrix}\begin{bmatrix}M_{pp}&M_{pq}\\M_{qp}&M_{qq}\end{bmatrix}\begin{bmatrix}c&s\\-s&c\end{bmatrix}=\begin{bmatrix}d_1&0\\0&d_2\end{bmatrix}\]
subject to the constraint $c^2+s^2=1$. The matrix $J$ is then given
\begin{itemize}
\item  \(J_{pp} = J_{qq} = c\)
\item  \(J_{pq} = -J_{qp} = s\)
\item  For other entries \(ij\),  $J_{ij} = I_{ij}$.
\end{itemize}
A numerical stable way of computing this problem is as follows:
\begin{itemize}
\item Assume $M_{pq}\neq 0$, and compute $$b = \frac{M_{pp} - M_{qq}}{2M_{pq}}.$$
\item Let $sgn(b)$ be 1 if $b\geq 0$ and -1 otherwise, then we define
$$t = \frac{sgn(b)}{|b|+\sqrt{b^2+1}}.$$
\item From which we can calculate \(c = \frac{1}{\sqrt{t^2+1}}\) and \(s = ct\).
\end{itemize}
The complexity of storing a Given's rotation matrix is $O(1)$, and Jacobi rotation over a $n\times n$ matrix uses $O(1)$ space with time complexity $O(n)$.

\subsection{Treelets}

The Treelets algorithm~\cite{Treelets, Treelets2} was designed to construct a multiscale basis and a corresponding hierarchical clustering over the attributes of some datasets in $\mathbb{R}^p$, to exploit sparsity.  In its most efficient implementation~\cite{Treelets2} it is an $O(np^2)$ algorithm. 
The algorithm starts with a regularization, hyper-parameter $\lambda$ and computing a \(p\times p\) (empirical) covariance matrix \(A_0\).
The initial scaling indices are defined as the set
\(S_0 = \{1,2,...p\}\). With base case \(A_0\) and \(S_0\),
each step \(A_k\) and \(S_k\) for \(k\in \{1,2,3,...,p-1\}\) can be constructed inductively 
as follows:

\begin{enumerate}
\def\labelenumi{\arabic{enumi}.}
\item
  Construct matrix \(M_{k}\) of the same shape as \(A_0\) entry-wise:
  $$[M_k]_{ij} = \sqrt{\frac{[A_{k-1}]_{ij}^2}{[A_{k-1}]_{ii}[A_{k-1}]_{jj}}} + \lambda |[A_{k-1}]_{ij}|.$$
\item 
  Find the two indices $\alpha_k,\beta_k$ such that 
  $$\alpha_k, \beta_k = \argmax_{\alpha, \beta \in S_{k-1}} [M_k]_{\alpha\beta}.$$.
\item
  Calculate Jacobi rotation matrix \(J_k\) for $\alpha_k, \beta_k$ and matrix \(A_k = J_k^T A_{k-1} J_k\).
\item
  Without loss of generality, \(\alpha_k\) and \(\beta_k\) is interchangeable, so
  we require that \([A_k]_{\alpha_k\alpha_k}\leq [A_k]_{\beta_k\beta_k}\), and record
  \(\alpha_k\) and \(\beta_k\).
\item
  Define \(S_k = S_{k-1} - \{\alpha_k\}\).
\end{enumerate}

\subsubsection{Treelets Transform and Treelets Basis}

The Jacobi rotations produce a Treelets basis for each
\(k\in \{1,2,3,...,p-1\}\). The sequence of matrices \(\{J_k\}\)  provides a basis for \(\mathbb{R}^p\), 
defined as
\[B_k = J_k^TJ_{k-1}^T\cdots J_2^TJ_1^T,\]
such that 
\[A_k = B_k A_0 B_k^T.\] 
So for every vector \(v\in \mathbb{R}^p\), there
is a \(k\)th basis representation \(B_kv\). Furthermore, there is a
compressed \(k\)th basis representation obtained by dropping
insignificant (\(<\epsilon\)) non-scaling indices of \(B_kv\). That is,
if we define \(e_i\) to be the \(i\)th column of the identity matrix,
the compressed \(k\)th basis representation is given by
\[\tau_k(v) =B_kv - \sum_{\substack{i\not\in S_i\\|B_kv\cdot e_i|<\epsilon}}(B_kv\cdot e_i)e_i.\]

\subsubsection{Treelets Hierarchical Clustering}

Treelets is also a hierarchical clustering method over the attributes. The hierarchical clustering structure is stored in $\alpha_k, \beta_k$. We start with trivial clustering where each element is in its own cluster and labeled by itself. For each $k$, we merge clusters labeled $\alpha_k$ and $\beta_k$ and label it $\beta_k$. This is feasible because each step $k$ the set of all cluster labels is exactly $S_{k-1}$. This operation gives a hierarchical tree for clustering use on the attributes. 

\subsection{Kernel Method}

The Kernel method~\cite{kernel} allow us to map variables into a new feature space via a kernel function. We now review briefly the basic concepts and ideas of this approach (see for example~\cite{TheodoridisBook}).

A kernel over some set \(X\) is defined as a function
\(K:X\times X\to \mathbb{R}\). A symmetric and positive semi-definite (SPSD) kernel $K$ has the properties:
\begin{align}
K(x_1,x_2) &= K(x_2,x_1), \text{ for all $x_1, x_2\in X$}. \\
\sum_{i=1}^s\sum_{j=1}^s c_ic_jK(x_i,x_j) &\geq 0, \text{ for all $\{x_1,...,x_s\}\in X$ and  all $ \{c_1,...,c_s\}\in \mathbb{R}$}
\end{align}
If $X$ is finite, then $K$ is SPSD if and only if $K(X,X)$ is a SPSD matrix. 
If $X\subseteq \mathbb{R}^p$, then $K$ is SPSD if and only if there exists a function
\(\Phi_K:\mathbb{R}^p\to \mathbb{H}\), where \(\mathbb{H}\) denotes the
Hilbert space, such that for all \(x_1, x_2\in X\),
\begin{align}
K(x_1,x_2) = \langle\Phi_K(x),\Phi_K(y)\rangle_\mathbb{H}.
\end{align}
The space $\mathbb{H}$ here is called a reproducing kernel Hilbert space (RKHS). 
The following are two common examples of SPSD kernels:
\begin{enumerate}
    \item Radial basis function (RBF) kernel 
        \[K(x_1,x_2) = \exp \{-\frac{||x_1-x_2||^2}{2\sigma^2}\}.\]
    \item Polynomial kernel
        \[K(x_1,x_2) = (\alpha \langle x_1, x_2\rangle + c_0)^r.\]
\end{enumerate}
A kernel $K$ for a set $X$ can be restricted to a subset $Y\subseteq X$, and SPSD property is preserved during restriction. If the task is clustering over a finite set, the selected kernel needs only be SPSD on the set of all samples, which is generally finite, and we only need to check that the kernel matrix is SPSD. If we need to extend the clustering outcome to other data, e.g. clustering boosted classification, then $X$ has to include the whole data space as a subset. 

\subsection{K Nearest Neighbors (KNN)}

K-nearest neighbors algorithm is a multi-class classification algorithm \cite{knn}. By specifying $k\in \mathbb{N}$ and a metric, the algorithm can, given a test data, predict its labels by the majority vote of a subset of $k$ closest elements in distance metric from training data. If an inner product is specified instead of distance, we can compute the distance between two point in the following way:
$$\lVert x_1 - x_2\rVert ^2 = \langle x_1 - x_2, x_1 - x_2\rangle = \langle x_1, x_1\rangle + \langle x_2, x_2\rangle -2\langle x_1, x_2\rangle .$$
If the metric is kernelized, 
\begin{align*}
    \lVert x_1 - x_2\rVert ^2 
    &= \langle x_1, x_1\rangle_\mathbb{H} + \langle x_2, x_2\rangle_\mathbb{H} -2\langle x_1, x_2\rangle_\mathbb{H}
    \\&= K( x_1, x_1) + K( x_2, x_2) -2K( x_1, x_2).
\end{align*}

\subsection{Kernel Support Vector Machine (SVM)}

Support Vector Machine (SVM) is a classification method by finding optimal hyper-planes. Kernel SVM \cite{ksvm} is a classification method towards nonlinear problems that performs SVM in RKHS generated by the kernel. When we only apply KT to a small sample, we may use kernel SVM with the same kernel to assign labels for data outside of this sample. This can be viewed as clustering attributes with treelets and using SVM to assign labels to other attributes in RKHS.

\section{The KT Model}

The task of KT is to find a clustering for some set $X$ given a SPSD kernel $K:X\times X\to \mathbb{R}$ measuring the similarity among variables. We combine Treelets with kernels by replacing the covariance $A_0$ with kernel matrix, and apply the rest of the steps of Treelets algorithm. The exact steps are as follows:

\begin{enumerate}
    \item First we draw a sample $S$ with size $n_S$ from uniform distribution on $X$ and some sample size $n_X$. If more information about $X$ is given, it may be possible to draw a sample $S$ that better represent $X$ with smaller sample size. 
    
    \item Then, we calculate the kernel matrix $A_0 = K(S,S)$. $A_0$ is a SPSD matrix because $K$ is SPSD, and thus we can apply Treelets algorithm with hyper-parameter $\lambda$ using $A_0$ instead of the (empirical) covariance matrix. $\lambda$ can be set to 0 or tuned experimentally as in Treelets. In this step,  theTreelets method provides a hierarchical clustering tree of each columns of $A_0$, which corresponds to each observation in $S$. 
    
    \item If $S=X$, we are finished on the step above. Otherwise, we need to cluster the elements in $X$ based on clusters we have from elements in $S$. We use kernel SVM to complete this task. Given $S$ and its corresponding cluster labels, we train the kernel SVM with the same kernel $K$, and then apply to predict the cluster labels of $X$. K-Nearest Neighbors with distance induced by kernel
    $$d(v_1, v_2)^2 = K(v_1, v_1) + K(v_2, v_2) - 2K(v_1, v_2)$$
    is an alternative to kernel SVM. 
\end{enumerate}

\subsection{Theory}
We now prove that the kernel projection is equivalent to working with a symmetric positive definite matrix 
defined by the inner product in $\mathbb{H}$ and evaluated through the kernel. We also suggest a definition of a clustering setting and clustering equivalence that allows us to connect the results of the clustering analysis for the original set with those of the transformed, projected set. 
\begin{lemma}
For every finite dataset $D = \{d_i: i=1,2,...,n\} \subseteq X$ and an SPSD kernel $K$, there exists an orthonormal Hilbert basis $B$ in the RKHS such that 
$$[\Phi_K (d_i)]_B = \begin{bmatrix}\delta_i\\0\end{bmatrix}, $$ 
where $\delta_i\in \mathbb{R}^n$ and $\begin{bmatrix} \delta_1 & \delta_2 & \cdots & \delta_n \end{bmatrix}$
is symmetric and positive semi-definite. 
\end{lemma}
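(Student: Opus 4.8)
The plan is to reduce this abstract statement about feature vectors in the RKHS to a concrete statement about the symmetric square root of the kernel matrix. First I would form the Gram matrix of the feature vectors, $A_0 = K(D,D)$, whose entries are $[A_0]_{ij} = \langle \Phi_K(d_i),\Phi_K(d_j)\rangle_{\mathbb{H}} = K(d_i,d_j)$; by the SPSD hypothesis on $K$ this is a symmetric positive semi-definite $n\times n$ matrix. By the spectral theorem write $A_0 = U\Lambda U^T$ with $U$ orthogonal and $\Lambda = \mathrm{diag}(\lambda_1,\dots,\lambda_n)$, $\lambda_i\geq 0$, and define the unique symmetric PSD square root $M = A_0^{1/2} = U\Lambda^{1/2}U^T$. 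This $M$ is precisely the candidate for $\mat{\delta_1 & \cdots & \delta_n}$: it is symmetric and positive semi-definite by construction, so the only remaining task is to exhibit an orthonormal basis $B$ in which the coordinates of $\Phi_K(d_i)$ are the columns $\delta_i$ of $M$ followed by zeros.

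The bridge between $M$ and the feature vectors is that both families share the same Gram matrix. Letting $\delta_i$ denote the $i$th column of $M$, symmetry of $M$ gives $\langle \delta_i,\delta_j\rangle = [M^TM]_{ij} = [M^2]_{ij} = [A_0]_{ij} = \langle \Phi_K(d_i),\Phi_K(d_j)\rangle_{\mathbb{H}}$. I would then invoke the standard fact that two finite families of vectors with identical Gram matrices are related by a linear isometry: the assignment $\delta_i \mapsto \Phi_K(d_i)$ extends to a well-defined isometry from $\mathrm{span}\{\delta_i\}\subseteq \mathbb{R}^n$ onto $V := \mathrm{span}\{\Phi_K(d_i)\}\subseteq \mathbb{H}$. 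Well-definedness and norm preservation both follow from $\ker M = \ker A_0$, since $My=0$ forces $A_0 y = 0$ and hence $\bigl\|\sum_i y_i\Phi_K(d_i)\bigr\|^2 = y^T A_0 y = 0$.

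To construct $B$ I would pick an orthonormal basis of $\mathrm{range}(M)\subseteq \mathbb{R}^n$, transport it through the isometry to an orthonormal basis of $V$, and then extend the result to an orthonormal basis $B = \{b_1,b_2,\dots\}$ of the full RKHS, ordering the vectors so that the first $n$ slots carry the standard $\mathbb{R}^n$ coordinates transported by the isometry. With this choice $\Phi_K(d_i) = \sum_{k=1}^n (\delta_i)_k\, b_k$, so $[\Phi_K(d_i)]_B = \mat{\delta_i\\0}$ and $\mat{\delta_1 & \cdots & \delta_n} = M = A_0^{1/2}$ is symmetric and positive semi-definite, as required.

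The step I expect to be the main obstacle is the basis extension, that is, the bookkeeping needed to pad the first block up to size $n$ when $A_0$ is rank-deficient. If $r = \mathrm{rank}(A_0) < n$, the columns $\delta_i$ span only an $r$-dimensional subspace of $\mathbb{R}^n$, yet we wish to use $n$ orthonormal directions $b_1,\dots,b_n$; this is possible precisely when $\dim V^\perp \geq n-r$, i.e. when the RKHS is at least $n$-dimensional. This holds automatically for the infinite-dimensional RKHS attached to kernels such as the RBF kernel, and in the degenerate finite-dimensional case one recovers the statement by embedding $\mathbb{H}$ isometrically into a larger Hilbert space (for instance $\ell^2$) before extending. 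I would isolate this dimension count as the one point requiring genuine care; everything else is the spectral theorem combined with the equal-Gram-matrix isometry argument.
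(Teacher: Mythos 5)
Your proposal is correct, and it reaches the paper's conclusion by a genuinely different route. The paper works inside $\mathbb{H}$ from the start: it Gram--Schmidt orthonormalizes $\{\Phi_K(d_i)\}$, extends to a Hilbert basis, writes the coordinate matrix $\hat D = [\hat d_1 \ \cdots \ \hat d_n]$, takes its SVD $\hat D = U\Sigma V^T$, and applies the block rotation $VU^T$ so that the new coordinate matrix becomes $V\Sigma V^T$ --- in effect extracting the symmetric factor of the polar decomposition of $\hat D$. You instead build the target matrix first, as the spectral square root $M = A_0^{1/2}$ of the kernel matrix $A_0 = K(D,D)$, and then manufacture the basis via the equal-Gram-matrix isometry lemma, with well-definedness secured by $\ker M = \ker A_0$. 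The two constructions yield the same object: since $\hat D^T \hat D = A_0$, the paper's $V\Sigma V^T$ is precisely the unique symmetric PSD square root $A_0^{1/2}$ (the paper's $Q$'s in its final display are typos for $V$). Your route buys two things: the identification $[\delta_1 \ \cdots \ \delta_n] = K(D,D)^{1/2}$ is explicit, which makes the subsequent corollary $\langle \Psi(D),\Psi(D)\rangle = M^2 = K(D,D)$ immediate rather than a separate computation; and you surface the genuine dimension-count condition $\dim \mathbb{H} \geq n$ needed to pad the first block when $\mathrm{rank}(A_0) < n$, a point the paper silently assumes when it extends $\hat B$ and declares all coordinates ``after $n$'' to vanish --- your $\ell^2$-embedding remark handles the degenerate finite-dimensional case at least as carefully as the paper does. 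The paper's route, in exchange, stays entirely within explicit coordinates (no abstract extension-of-isometry step), which matches how the treelet machinery consumes the result. One further point in your favor: you correctly conclude positive \emph{semi}-definiteness, whereas the paper's closing line overclaims ``positive definite,'' which fails whenever $A_0$ is rank-deficient.
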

\begin{proof}
We apply Gram-Schmidt orthogonalization process to the maximal linearly independent subset of  $\{\Phi_K(d_i):i=1,2,...,n\}$ and get a set of orthonormal vectors $\{\hat\beta_i:i=1,2,...,\eta\}$, where 
$$\eta = \text{dim}(\text{span}\{\Phi_K(d_i):i=1,2,...,n\}) \leq n.$$
We may extend this set to a orthonormal Hilbert basis $\hat B = \{\hat\beta_i:i=1,2,...\}$. Then $\forall i\in \{1,2,...,n\}$, $[\Phi_K(d_i)]_{\hat B}$ is 0 for all entries after $\eta$ and consequently after $n$, so there exists $\hat d_i \in \mathbb{R}^n$ such that
$$[\Phi_K(d_i)]_{\hat B} = \begin{bmatrix} \hat{d}_i \\ 0 \end{bmatrix}.$$
As $\begin{bmatrix} \hat d_1 & \hat d_2 & \cdots & \hat d_n\end{bmatrix}$ is a square matrix, we may compute its singular value decomposition 
$$\begin{bmatrix} \hat d_1 & \hat d_2 & \cdots & \hat d_n\end{bmatrix} = U\Sigma V^T.$$
We can now define a new orthonormal Hilbert basis $B = \{\beta_i:i=1,2,...\}$ through  the change of basis matrix $\begin{bmatrix} VU^T & 0 \\ 0 & I\end{bmatrix}$. 
Let $\delta_i = VU^T\hat d_i$ for all $i\in \{1,2,...,n\}$, then
$$[\Phi_K(d_i)]_B = \begin{bmatrix} VU^T & 0 \\ 0 & I\end{bmatrix}[\Phi_K(d_i)]_{\hat B} = \begin{bmatrix}VU^T\hat d_i\\ 0\end{bmatrix} = \begin{bmatrix}\delta_i \\ 0\end{bmatrix}.$$
The projected data $\Phi_K(d_i)$ in basis $B$ is $\begin{bmatrix}\delta_i^T & 0\end{bmatrix}^T$ and  the matrix
$$\begin{bmatrix} \delta_1 & \delta_2 & \cdots & \delta_n \end{bmatrix} = QU^T\begin{bmatrix} \hat d_1 & \hat d_2 & \cdots & \hat d_n\end{bmatrix} = Q\Sigma Q^T$$
is symmetric and positive definite. 
\end{proof}
\begin{corollary}
If we denote $\Psi: V\to \mathbb{R}^n$ such that for all $v\in V$, 
$$\begin{bmatrix}\Psi(v)\\ *\end{bmatrix} = [\Phi_K(v)]_B.$$
or in other words, $\Psi(v)$ is the first $n$ components of $\Phi_K(v)$ in the basis $B$. Then for all $d_i\in D$, 
$$\begin{bmatrix}\Psi(d_i)\\0\end{bmatrix} = [\Phi_K(d_i)]_B,$$
that is $\Psi(d_i) = \delta_i$. From the lemma, we have that
$\Psi(D) = \begin{bmatrix}\delta_1&\delta_2&\cdots &\delta_n\end{bmatrix}$
is symmetric and positive definite and 
$$\langle \Psi(D), \Psi(D)\rangle = \begin{bmatrix}\delta_1&\delta_2&\cdots &\delta_n\end{bmatrix}^2 = \langle \Phi_K(D), \Phi_K(D)\rangle_\mathbb{H}.$$
\end{corollary}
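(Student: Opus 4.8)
The plan is to read off the first two assertions directly from the Lemma and then to obtain the Gram-matrix identity from the fact that coordinates in an orthonormal Hilbert basis preserve inner products. First I would note that $\Psi(d_i)=\delta_i$ is immediate: the Lemma supplies the basis $B$ together with the representation $[\Phi_K(d_i)]_B=\begin{bmatrix}\delta_i\\0\end{bmatrix}$, and since $\Psi$ is \emph{defined} to return the first $n$ coordinates of $[\Phi_K(\cdot)]_B$, specializing to $v=d_i$ gives $\Psi(d_i)=\delta_i$ with the remaining coordinates vanishing exactly as the Lemma guarantees. Consequently $\Psi(D)=\begin{bmatrix}\delta_1&\cdots&\delta_n\end{bmatrix}$, whose symmetry and positive semidefiniteness are precisely the Lemma's final assertion, so those two claims require no new work.

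The substantive step is the inner-product identity. My approach is to use that $B$ is orthonormal, so that the coordinate map $\Phi\mapsto[\Phi]_B$ is an isometry and in particular preserves inner products. Applying this to $\Phi_K(d_i)$ and $\Phi_K(d_j)$ and substituting the block form from the Lemma, I would compute
$$\langle \Phi_K(d_i),\Phi_K(d_j)\rangle_\mathbb{H} = \left\langle \begin{bmatrix}\delta_i\\0\end{bmatrix},\begin{bmatrix}\delta_j\\0\end{bmatrix}\right\rangle = \delta_i^T\delta_j.$$
Collecting these equalities over all pairs $(i,j)$ yields the matrix equation $\langle \Phi_K(D),\Phi_K(D)\rangle_\mathbb{H} = \begin{bmatrix}\delta_1&\cdots&\delta_n\end{bmatrix}^T\begin{bmatrix}\delta_1&\cdots&\delta_n\end{bmatrix}$, and because the $\delta$-matrix is symmetric the right-hand side is $\begin{bmatrix}\delta_1&\cdots&\delta_n\end{bmatrix}^2$. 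Via the identification $\Psi(d_i)=\delta_i$ this is exactly $\langle\Psi(D),\Psi(D)\rangle$, which supplies two of the three terms in the claimed chain.

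Finally I would close the loop with the reproducing-kernel property $\langle \Phi_K(d_i),\Phi_K(d_j)\rangle_\mathbb{H}=K(d_i,d_j)$, so that $\langle \Phi_K(D),\Phi_K(D)\rangle_\mathbb{H}$ is literally the kernel matrix $K(D,D)$. Chaining the three equalities then gives $\langle\Psi(D),\Psi(D)\rangle=\begin{bmatrix}\delta_1&\cdots&\delta_n\end{bmatrix}^2=\langle \Phi_K(D),\Phi_K(D)\rangle_\mathbb{H}$, which is the statement.

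I expect the only point requiring care to be the isometry step, since $[\Phi_K(v)]_B$ a priori lives in $\ell^2$ and may carry infinitely many entries. Rather than invoking Parseval over the whole (possibly non-separable) space $\mathbb{H}$, the clean route is to localize to the finite-dimensional subspace $\mathrm{span}\{\Phi_K(d_i):i=1,\ldots,n\}$ spanned by the data images: on this subspace the relevant vectors of $B$ form a finite orthonormal system, the coordinate map is an honest isometry onto a Euclidean space, and the reduction of $\langle \Phi_K(d_i),\Phi_K(d_j)\rangle_\mathbb{H}$ to the finite sum $\delta_i^T\delta_j$ is immediate with no convergence concerns. This keeps the entire argument finite-dimensional and makes the block computation rigorous.
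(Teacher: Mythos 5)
Your proposal is correct and takes essentially the same route as the paper, which presents the corollary as an immediate consequence of the lemma: read $\Psi(d_i)=\delta_i$ off the block form $[\Phi_K(d_i)]_B=\begin{bmatrix}\delta_i\\0\end{bmatrix}$, and use orthonormality of $B$ to equate the Gram matrices, yielding $\langle\Psi(D),\Psi(D)\rangle=\Psi(D)^2=\langle\Phi_K(D),\Phi_K(D)\rangle_\mathbb{H}$. The paper supplies no explicit proof and leaves the Parseval/isometry step implicit (it reappears only as the chain $\Psi(D)\Psi(D)^T=\langle\Phi_K(D),\Phi_K(D)\rangle_\mathbb{H}=K(D,D)$ in the subsequent subsection), so your localization to the finite-dimensional span $\mathrm{span}\{\Phi_K(d_i)\}$ simply makes rigorous what the paper takes for granted.
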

\subsubsection{Clustering Equivalences}

A clustering setting is a pair $(D, f)$ where $D$ is an finite ordered dataset and $f:D\times D\to \mathbb{R}$ is a measurement on the dataset $D$. We define an equivalence on the clustering setting that $(D_1, f_1) = (D_2, f_2)$ if and only if $f_1(D_1, D_1) = f_2(D_2, D_2)$. For any measurement based clustering method, using measurement $f_1$ on $D_1$ provides the same exact clustering outcome on the labels as using measurement $f_2$ on $D_2$. An example of clustering equivalences is that if kernel $K$ corresponds to projection $\Phi_K$, then there is $K(D, D) = \langle \Phi_K(D), \Phi_K(D)\rangle_\mathbb{H}$, and therefore $(D, K) = (\phi_K(D), \langle \cdot, \cdot \rangle_\mathbb{H})$. 

\subsubsection{Kernel Treelets Equivalences}

For a dataset $\{d_i:i=1,2,...,n\}$ and a kernel $K$, we already know that there is a clustering equivalence 
$(D, K) = (\phi_K(D), \langle \cdot, \cdot \rangle_\mathbb{H})$. 
From the corollary of lemma 1, there is 
$\langle \Psi(D), \Psi(D)\rangle = \langle \phi_K(D), \phi_K(D)\rangle_\mathbb{H}$, 
which provides the equivalence 
$(\phi_K(D), \langle \cdot, \cdot \rangle_\mathbb{H}) = (\Psi(D), \langle \cdot, \cdot \rangle)$. As $\Psi(D)$ is symmetric, 
$(\Psi(D), \langle \cdot, \cdot \rangle) = (\Psi^T(D), \langle \cdot, \cdot \rangle)$.
As a conclusion, $(D, K) = (\Psi^T(D), \langle \cdot, \cdot \rangle)$, which implies that a clustering method measured with inner product on dataset $\Psi^T(D)$ provides a clustering of $D$ measured with kernel $K$. Therefore, Treelets on $\Psi(D)$ without centering provides a hierarchical clustering of attributes of $\Psi(D)$ based on attribute inner product (covariance matrix), which is a hierarchical clustering of $\Psi^T(D)$ based on inner product. According to clustering setting equivalences, this hierarchical clustering is equivalent to a hierarchical clustering of $D$ based on kernel $K$. Furthermore, a property of Treelets is that $\Psi(D)$ does not necessarily need to be computed. The "covariance matrix" of $\Psi(D)$ without centering has a easier computation method:
$$Cov(\Psi(D)) = \Psi(D)\Psi(D)^T = \Psi(D)^2 = \langle \Psi(D), \Psi(D)\rangle = \langle \phi_K(D), \phi_K(D)\rangle_\mathbb{H} = K(D, D).$$
So we may avoid the costly spectral decomposition to compute $\Psi(D)$ and define $A_0$ of Treelets as 
$$A_0 = Cov(\Psi(D)) = K(D, D).$$

\subsection{Complexity}

The complexity of this algorithm is $O(\xi n_S^2 + n_S n_V)$, where $\xi$ is the complexity of applying kernel function to a pair of data and $\xi=p$ if the data is numeric. In this model, the choice of kernel $K$ determines the expected outcome of the prediction and the choice of sample $S$ determines the variability of the outcome. A small sample size $S$ speeds up the algorithm with the cost of generating false clustering by unrepresentative samples, while large sample size slow down the algorithm and also produces numerical issues because data is more likely to be close to orthogonal as the dimension of projected space grows, and Treelets method would be forced to stop if all remaining components are almost orthogonal. The optimal sample size depends on the floating number accuracy and computation time allowed and should be as large as possible without exceeding the time limit and accuracy limit.
\section{Examples}

We implemented KT and the following examples in Python  with package Numpy~\cite{numpy}, Scikit-learn~\cite{sklearn}, and plots were generated with  Matplotlib~\cite{matplotlib}. The Treelets part of our implementation is not optimized, so it is $O(n^3)$ runtime in the followings examples rather than $O(n^2)$ as designed by Lee et al \cite{Treelets}. The hyperparameter $\lambda$ is set to 0 for all the experiments below.

\subsection{Clustering for 6 Datasets}

To illustrate how KT works as a hierarchical clustering method, 
we use an example from scikit-learn~\cite{sklearn} which consists of 6 datasets, each of which has 1500 two-dimensional data points (i.e. $n=1500$ and $p=2$), and we can visualize each dataset and each cluster by plotting each observation as a point in the plane. Each of the first five datasets consists of data drawn from multiple shapes with an error in distance. The sixth dataset consists of a uniform random sample from $[0,1]^2$ to show how clustering method work for uniform distributed data.  Figure~1 shows how KT with different kernels works on these datasets compared to the performance of some other clustering methods. The number of clusters and hyper-parameters are tuned for each method and the sample sizes are set to 1000 for each KT method. Each row of this image represents a dataset and each column represents a clustering method. The method each column represents and and its runtime on each dataset is in recorded in Table~1. 

\begin{figure}[ht!]\centering
\includegraphics[width=1\textwidth]{{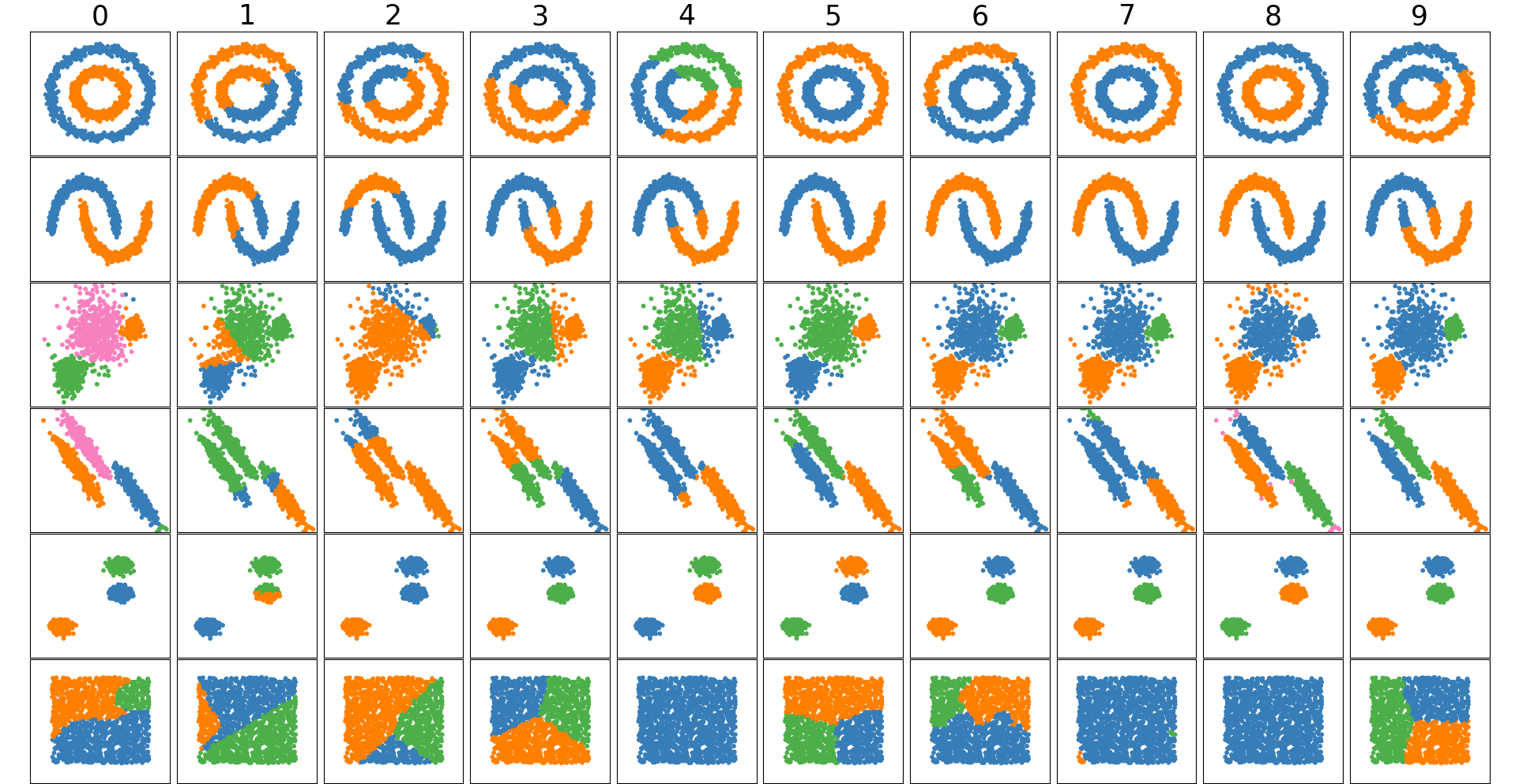}}
\caption{Comparison of different clustering algorithms on 6 datasets.}
\end{figure}
\begin{table}[ht!]
\begin{center}
\begin{tabular}{|l|l|l|l|l|l|l|}
\hline
Method\textbackslash{}Dataset & 1     & 2     & 3     & 4     & 5     & 6     \\ \hline
0 - KTrbf                     & 2.003 & 2.063 & 2.325 & 2.094 & 2.819 & 1.967 \\ \hline
1 - KTlinear                  & 1.585 & 1.613 & 1.402 & 1.73  & 2.341 & 1.469 \\ \hline
2 - KTpoly                    & 3.956 & 6.08  & 6.878 & 9.582 & 9.836 & 4.526 \\ \hline
3 - MiniBatchKMeans           & 0.006 & 0.018 & 0.009 & 0.01  & 0.007 & 0.009 \\ \hline
4 - MeanShift                 & 0.047 & 0.032 & 0.063 & 0.057 & 0.032 & 0.05  \\ \hline
5 - SpectralClustering        & 0.642 & 1.011 & 0.13  & 0.352 & 0.257 & 0.208 \\ \hline
6 - Ward                      & 0.114 & 0.098 & 0.513 & 0.245 & 0.111 & 0.087 \\ \hline
7 - AgglomerateClustering     & 0.085 & 0.102 & 0.374 & 0.196 & 0.103 & 0.078 \\ \hline
8 - DBSCAN                    & 0.015 & 0.014 & 0.015 & 0.012 & 0.067 & 0.012 \\ \hline
9 - GaussianMixture           & 0.005 & 0.005 & 0.008 & 0.012 & 0.004 & 0.009 \\ \hline
\end{tabular}
\end{center}
\caption {Method and Runtime Table for Figure 1}
\end{table}

In this experiment, KT with RBF kernel is the only method that performs clustering closest to human intuition for all first five datasets. The sixth dataset is a uniform distribution in $[0,1]^2$ which we may see how KT is affected by the relative density deficiency in some area due to sampling. Its high performance on the first five datasets is expected as these datasets are to some extent Euclidean distance-based, which corresponds to the assumptions for RBF kernels. Fig.2 shows how difference of number of sample points affects the clustering result. Each column represents KT using RBF kernel with different sample sizes. The hyper-parameter $\sigma=0.1$ is tuned towards $n_S=1000$ case and is used for all other sample sizes. Notice that as KT1500 is of full sample size, it does not trigger kernel SVM whereas KT1499 do. Their number of clusters and runtime is recorded in Table 2. From here we can see that more sample data implies more runtime and more stable outcome. The minimum optimal number of samples required for the first 5 datasets are 1000, 100, 1000, 200, 50, respectively, which shows that different datasets requires different amount of samples to explain its shape. Furthermore, the fourth dataset shows that optimal hyper-parameter $\sigma$ is number-of-sample dependent. RBF kernel can be considered as a weighted average of distance and connectivity, where a larger $\sigma$ means a higher weight on distance. For the same $\sigma=0.1$, as sample size gets larger, the clustering result becomes more distance-based rather than connectivity based, demonstrating that optimal $\sigma$ for those sample sizes are actually smaller.

\begin{figure}[ht!]\centering
\includegraphics[width=1\textwidth]{{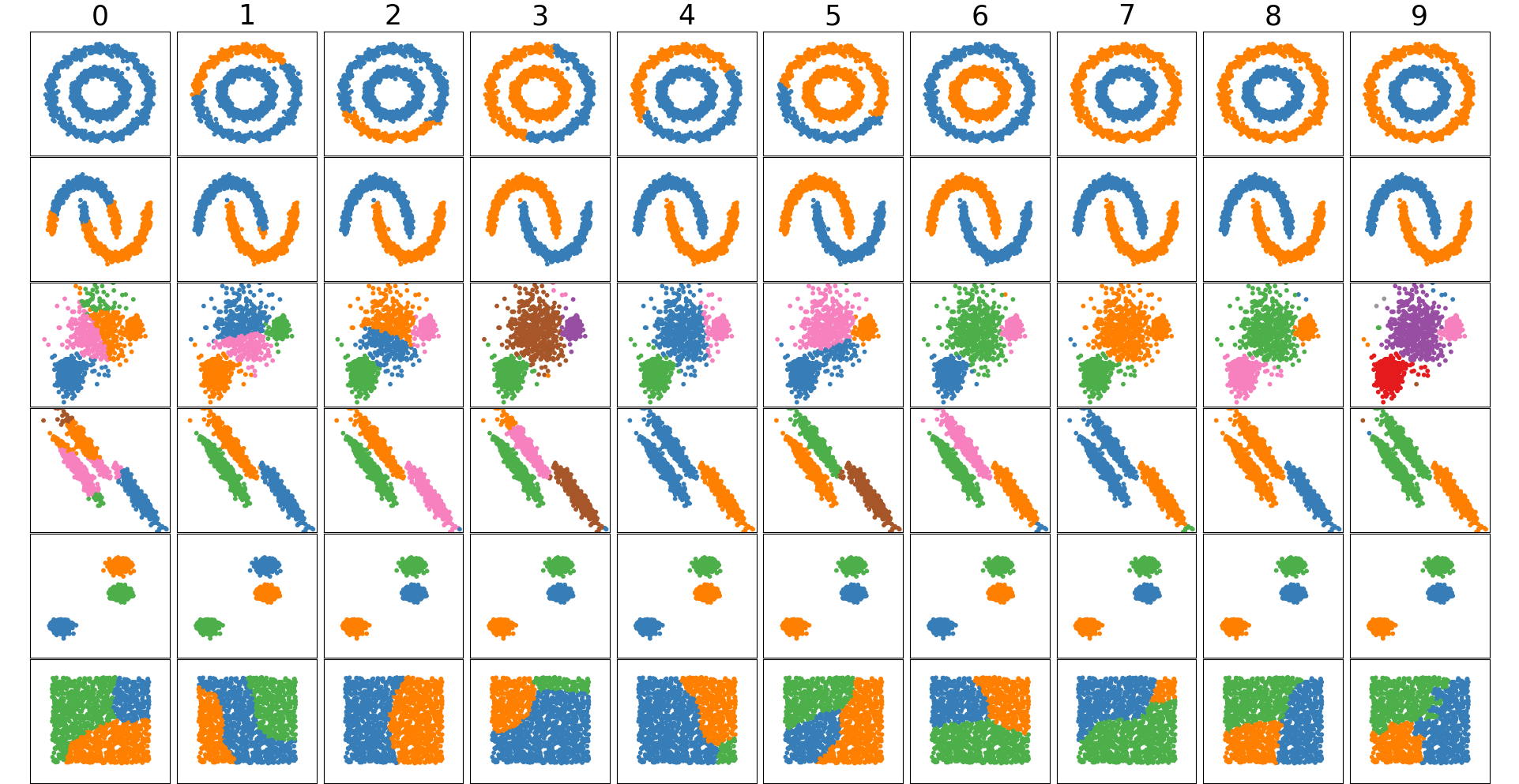}}
\caption{Comparison of different number-of-cluster estimate on 6 datasets.}
\end{figure}
\begin{table}[ht!]
\begin{center}
\begin{tabular}{|l|l|l|l|l|l|l|}
\hline
Method\textbackslash{}Dataset & 1     & 2     & 3     & 4     & 5     & 6     \\ \hline
0 - KT50                      & 0.011 & 0.012 & 0.013 & 0.011 & 0.011 & 0.01  \\ \hline
1 - KT100                     & 0.035 & 0.044 & 0.039 & 0.045 & 0.033 & 0.028 \\ \hline
2 - KT200                     & 0.109 & 0.099 & 0.128 & 0.12  & 0.121 & 0.132 \\ \hline
3 - KT300                     & 0.225 & 0.217 & 0.242 & 0.269 & 0.259 & 0.235 \\ \hline
4 - KT500                     & 0.551 & 0.568 & 0.62  & 0.569 & 0.652 & 0.536 \\ \hline
5 - KT800                     & 1.315 & 1.513 & 1.534 & 1.378 & 1.699 & 1.295 \\ \hline
6 - KT1000                    & 2.016 & 2.055 & 2.336 & 2.098 & 2.782 & 1.941 \\ \hline
7 - KT1200                    & 2.88  & 2.94  & 3.242 & 3.004 & 4.146 & 2.77  \\ \hline
8 - KT1499                    & 4.438 & 4.532 & 5.4   & 4.713 & 6.788 & 4.341 \\ \hline
9 - KT1500                    & 4.472 & 4.69  & 5.398 & 4.807 & 6.782 & 4.274 \\ \hline
\end{tabular}
\end{center}
\caption {Method and Runtime Table for Figure 2}
\end{table}

\subsection{Clustering for Social Network Dataset}

To illustrate how KT works in network analysis
we use an example from Stanford Network Analysis Project~ \cite{network_data}. 
This is a dataset consisting of 'circles' (or 'friends lists') from Facebook. 
It has $n_V=4039$ surveyed individual (vertices) and each two of them is connected with vertices if they are friends and not if they are not friends, which are the edges. 
The edges are undirected and not weighted, and the total number of edges is 88234. We use KT to do clustering on this dataset with full sample size ($S = V$). Denote the set of vertices on the graph as $V$, and define a kernel function $K:V\times V\to \mathbb{R}$ such that 

$$K(v_1,v_2) = \begin{cases}
1045 & v_1=v_2\\
1 & v_1,v_2\ are\ connected\\
0 & otherwise
\end{cases}$$

The number 1045 is computed and chosen as the largest degree of all vertices. Notice that $K$ is a SPSD kernel on $V$ because $K(V,V)$ is a symmetric matrix and is also dominant by the positive diagonal, as $\forall i\in \{1,2,...,n\}$
$$\sum_{j\neq i} |K(V,V)_{i,j}| = deg(v_i) \leq \max_\zeta deg(v_\zeta) = 1045 = K(V,V)_{i,i}.$$
To estimate the performance of KT as a multi-scale clustering method on this dataset, we use an evaluation as follows. For each cluster partition in the hierarchy, we compute its matching matrix and its corresponding true positive rate as well as false positive rate. Matching matrix, a type of confusion matrix, is a 2 by 2 matrix recording the number of true positives, true negatives, false positives, and false negatives for pairwise associations. True positive rate measure the proportion of two nodes being in the same cluster given the two nodes are connected and false positive rate measures the proportion of two nodes being in the same cluster given the two nodes are not connected. Each pair of true positive rate and false positive rate produces a point on the plane, and interpolating the set of points of all clustering results in the hierarchy (with order) produces the Receiver operating characteristic (ROC) curve, and the numerical integral over $[0,1]$ interval of this curve is known as Area Under Curve (AUC). Figure~3 demonstrates the performance of KT on the dataset, which provides good clusterings for the dataset because it has an AUC as high as $0.958$. 

\begin{figure}[ht!]\centering
\includegraphics[width=1\textwidth]{{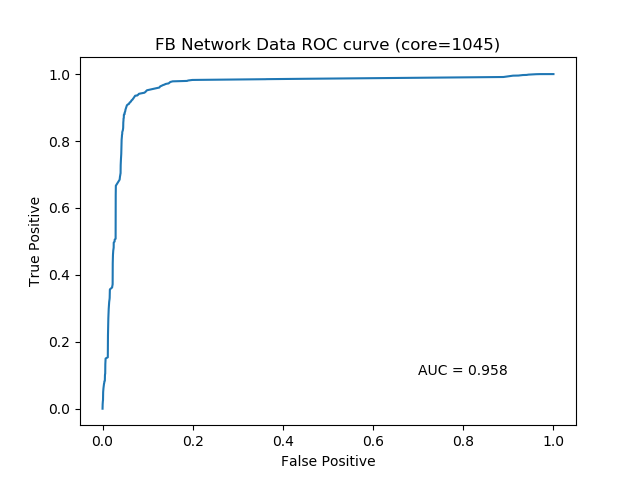}}
\caption{Clustering of Facebook Network Dataset Result}
\end{figure}

\subsection{Clustering for Dataset with Missing Infomation}

To illustrate how KT works on dataset with missing information,
we use Mice Protein Expression (MPE) dataset \cite{mice} 
from UCI Machine Learning Repository as an example. 
This is a dataset consisting of 1080 observations for 8 classes of mice, 
each of which containing 77 expression levels of different proteins 
with some of the entries are not avalible. 
We use KT to do clustering on this dataset. First we normalize these attributes so that each of them has empirical mean 0 and standard deviation 1. Then we define a RBF kernel for dataset with missing data such that for all observation $u, v$, 
$$K(u,v) = \exp \Bigg\{ - \frac{32}{|E_{uv}|} \sum_{i\in E_{uv}}\lVert u_i-v_i\rVert^2\Bigg\}$$
Where $E_{uv}$ is the set of indices that is avalible (not missing) in both $u$ and $v$. 
We check that $E_{uv}\neq \emptyset$ so that it is well-defined.
The number 32 is a parameter tuned with experiments. 
We compare the predicted clusters and the true labels according to pairwise scores. Fig.4 shows how KT performs compared to KMeans clustering. We measure the true positive rate as the proportion of two record being in the same cluster given that they are from mice of the same type, and the false positive rate as the proportion of two record being in the same cluster given that they are from mice of different type. Similar as the example of network dataset, we draw its ROC curve and calculate its AUC. Also, we use KMeans with multiple number of clusters for comparison. The AUC of KT is much higher than the AUC of KMeans ($0.726 > 0.579$), demonstrating KT is a much better clustering method for this dataset than KMeans.

\begin{figure}[ht!]\centering
\includegraphics[width=1\textwidth]{{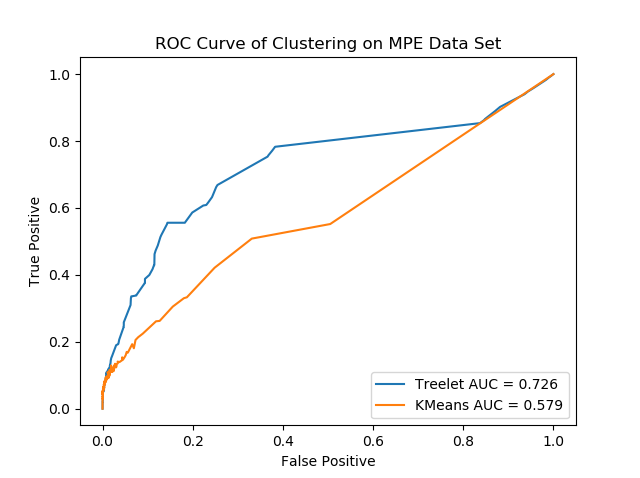}}
\caption{Comparison of KT and KMeans on MPE dataset}
\end{figure}

\section{Conclusion}

In the paper we describe a novel approach, kernel treelets (KT),  for hierarchical clustering. The method relies on applying the treelet algorithm to a matrix measuring similarities among variables in a feature, reproducing kernel Hilbert space. We show with some examples that KT is as useful as other hierarchical clustering methods and is especially competitive for datasets  without numerical matrix representation and or missing data. The KT approach also shows significant potential for semi-supervised learning tasks and as a pre-processing, post-processing step in deep-learning. Work in these directions is underway.

\medskip

\bibliographystyle{unsrt}
\bibliography{main}

\end{document}